\newtheorem{theorem}{Theorem}
\DeclareMathOperator*{\amin}{min}
\DeclareMathOperator*{\amax}{\,max}
\pgfplotsset{compat=1.17}
\crefname{section}{Sec.}{Secs.}
\Crefname{section}{Section}{Sections}
\Crefname{table}{Table}{Tables}
\crefname{table}{Tab.}{Tabs.}
\begin{document}
\title{Look ATME: The Discriminator Mean Entropy Needs Attention}

\author{~ \hfill Edgardo Solano-Carrillo \hfill Angel Bueno Rodriguez \hfill Borja Carrillo-Perez \hfill ~\\ ~ \hfill Yannik Steiniger \hfill Jannis Stoppe \hfill ~\\
German Aerospace Center (DLR), Institute for the Protection of Maritime Infrastructures \\
\texttt{\footnotesize\{Edgardo.SolanoCarrillo, Angel.Bueno, Borja.CarrilloPerez, Yannik.Steiniger, Jannis.Stoppe\}@dlr.de}
}

\makeatletter
\let\@oldmaketitle\@maketitle
\renewcommand%
    {\@maketitle}%
    {%
        \@oldmaketitle
        \vspace{-1em}
        \centering
	\includegraphics[width=\linewidth]{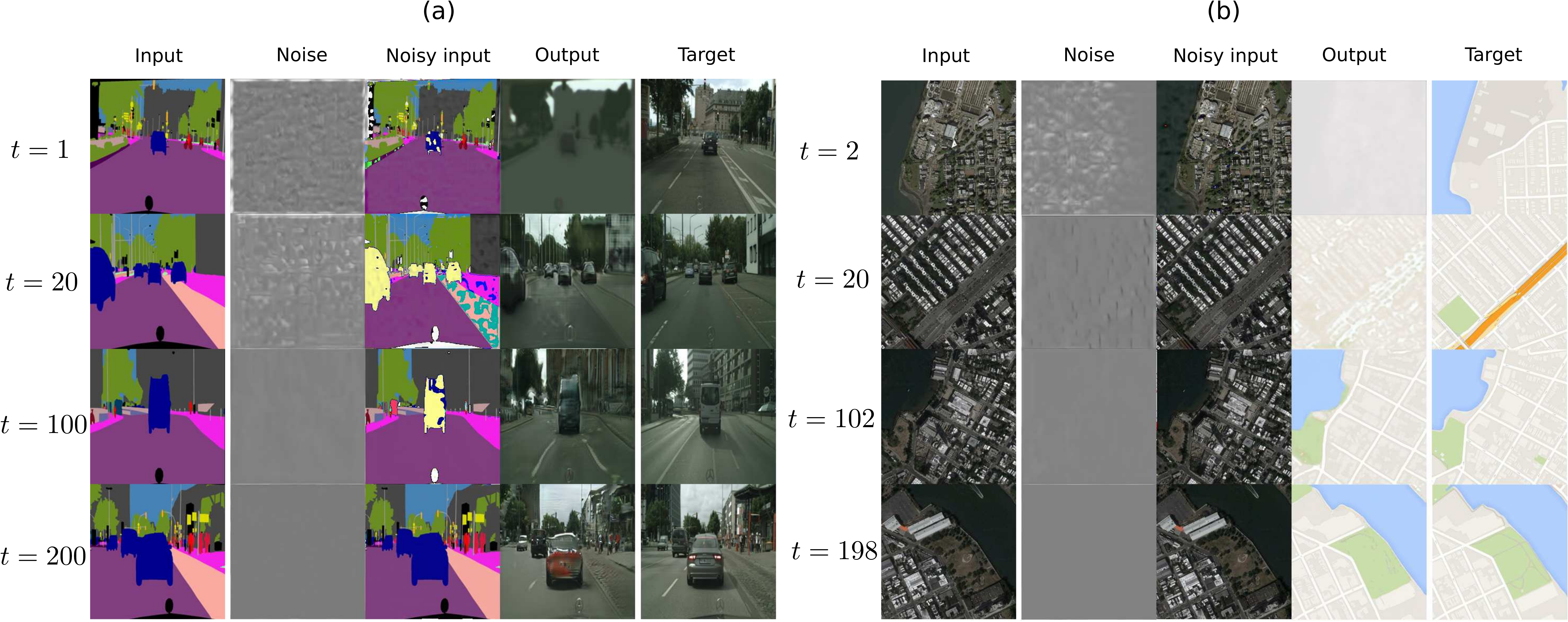}  
        \captionof{figure}%
            {ATME is a GAN where, for each iteration at epoch $t$, the input images for the generator are corrupted with a noisy representation of the discriminator's decision map at epoch $t-1$. The adversarial game enforces this noise to be removed, sending the proper signals for equilibration by encouraging the discriminator to converge towards its maximum entropy state. Convergence is often slower in the cases where the noise significantly affects the semantic content of the input (a) and is faster in the opposite cases (b). %
            }\label{fig:denoising}
            \vspace{0.3cm}
     }
\makeatother

\maketitle

\begin{abstract}
  Generative adversarial networks (GANs) are successfully used for image synthesis but are known to face instability during training. In contrast, probabilistic diffusion models (DMs) are stable and generate high-quality images, at the cost of an expensive sampling procedure. In this paper, we introduce a simple method to allow GANs to stably converge to their theoretical optimum, while bringing in the denoising machinery from DMs. These models are combined into a simpler model (ATME) that only requires a forward pass during inference, making predictions cheaper and more accurate than DMs and popular GANs. ATME breaks an information asymmetry existing in most GAN models in which the discriminator has spatial knowledge of where the generator is failing. To restore the information symmetry, the generator is endowed with knowledge of the entropic state of the discriminator, which is leveraged to allow the adversarial game to converge towards equilibrium. We demonstrate the power of our method in several image-to-image translation tasks, showing superior performance than state-of-the-art methods at a lesser cost. Code is available at \href{https://github.com/DLR-MI/atme}{https://github.com/DLR-MI/atme}.
\end{abstract}


\section{Introduction}
Recent advances in deep learning have led to remarkable progress in the field of image synthesis. Among the most exciting applications, image-to-image translation (where an image in domain A is transformed into a different domain B while preserving the original semantic content) has played a prominent role~\cite{review_img2img_2022}. This task is often addressed using GANs~\cite{gan_goodfellow_nips_2014} or, more recently, with DMs~\cite{ho_diffusion}. Although DMs have been shown to produce high-quality images with unprecedented success, it does so after sequential sampling over multiple time steps. On the other hand, GANs require only a single forward pass for prediction, but suffer from training instabilities that hinder performance. 

In this paper, we propose a novel model for image-to-image translation that harnesses the high-quality image generation power of DMs while eliminating their time-sampling limitation using a GAN. Our approach recognizes that the training instabilities in the latter are rooted in a phenomenon similar to the violation of the second law of thermodynamics by Maxwell's demon~\cite{RevModPhys.81.1, mdemon2017}, and suggests a simple solution to avoid this. 

In order to achieve stable training, we build a GAN whose generator receives images corrupted by a noisy representation of the discriminator's decision map --- as it traverses the training epochs, but not across an independent time-axis as in DMs. The generator then learns to denoise its input to produce the output image, enforcing the discriminator's convergence to its maximum entropy state, as shown in practice by the approach of the GAN (on average) to its theoretical optimum corresponding to the Nash equilibrium ~\cite{pmlr-v119-farnia20a}.   

By learning to diffusively attend to the discriminator mean entropy, our model (ATME) helps to improve training stability by breaking the information asymmetry between the generator and discriminator, leading to better performance in image-to-image translation tasks.

The main contributions of this work are therefore:
\begin{itemize}
 \setlength{\itemsep}{0.25em}
 \item A novel model that fuses the sampling strengths of GANs with the core denoising ideas of DMs into a single efficient model for image-to-image translation.
 \item A practical and simple measure of convergence of GAN models, consistent with the original theoretical description of their optimality.
\end{itemize}

Our approach builds on recent advancements in the field, particularly from diffusion models. These have achieved state-of-the-art performance in image generation~\cite{lehtinen2021diffusion}. Nevertheless, they require thousands of model evaluations to generate high-quality samples~\cite{salimans2022progressive,super_resolution, chen2020wavegrad}. Bridging the gap with GANs is therefore an important step towards enabling high-quality and efficient image-to-image translation for a range of practical applications.

\section{Related work}

\noindent\textbf{GANs for image-to-image translation}. GANs have been for a long time the de facto method for generation of synthetic images~\cite{review_img2img_2022}. pix2pix~\cite{isola2017image} was the first unified framework for supervised image-to-image translation using conditional GANs. It serves as a foundational model on top of which other solutions have been built, such as adding cycle consistency to a couple of GANs, \ie CycleGAN~\cite{zhu2017unpaired}, for unsupervised image-to-image translation. The latter has further inspired other models such as UNIT~\cite{unit_nips_2017}, which leverages a latent representation of the support of the joint distribution of the unpaired images, and several other multi-domain variants~\cite{choi_stargan, Huang_2018_ECCV, Liu_2019_ICCV}. Of special importance for this work is the use of attention mechanisms in GANs. In particular, FAL~\cite{Huh_2019_CVPR} improves image synthesis with a generator that repeatedly receives feedback---in several forward passes---from the discriminator. SPA-GAN~\cite{spa_gan_2021}, computes attention in the discriminator to help the generator focus on the most discriminative regions between source and target domains. Most recently, ASGIT~\cite{Lin_2021_WACV} also enforces spatial guidance by adding attention in the discriminator, surpassing previous methods for supervised and unsupervised image-to-image translation.

Our approach builds on pix2pix, recognizing the information advantages of its patch discriminator, which is counterbalanced by adding attention to the generator. Since the main focus in this work is the effect on convergence, we study this in a supervised setting.

\noindent\textbf{Convergence during GAN training}. Several proposals have been made to address the stability issues posed by training GANs, which include vanishing or exploiting gradients and mode collapse. These typically manifest as an ill-behaved Jacobian of the  gradient vector field of the associated GAN objectives~\cite{mescheder_nips2017}. To address this, SNGAN~\cite{miyato2018spectral} proposes a weight normalization technique called spectral normalization to stabilize the training of the discriminator. On the other hand, WGAN~\cite{wgan_icml_2017} introduces the Wasserstein distance between real and fake distributions as an objective to optimize, alleviating the mode collapse problem of vanilla GANs~\cite{gan_goodfellow_nips_2014}, which optimize the Jensen-Shannon divergence.  WGAN-GP~\cite{wgan_gp_nips_2017} improves training in practice by adding a gradient penalty to enforce the required discriminator 1-Lipschitz constraint. Alternatively, LSGAN~\cite{mao2017least} optimizes the Pearson $\chi^{2}$ divergence between the real and fake distributions. Viewing the convergence in GAN training as a matter of finding the right divergence to minimize at each step is misleading though ~\cite{fedus2017many}; more beneficial convergence characteristics are found in practice by adding instance noise or gradient penalties~\cite{mescheder2018training}.

Architectures also play a role in the stability of GAN training. Energy-based GANs view the discriminator as an energy function taking on lower values for regions near the data manifold. By using the reconstruction error of an autoencoder as an energy function, EBGANs~\cite{zhao2017energybased} exhibit more stable behavior than vanilla GANs. After approximating the Wasserstein distance using autoencoders,  BEGAN~\cite{berthelot2017began} intends to balance the generator and discriminator during training. RGANs~\cite{jolicoeur_iclr_2019} make the discriminator relativistic (i.e. discriminating whether real data is more realistic than fake data) making training more stable. 

Rather than improving network architectures, or changing the objectives functions for training, or regularizing gradients/weights; our work focuses on vanilla GANs with standard networks, stabilizing training by symmetrizing the information exchange between the GAN adversaries.

\vspace{0.1cm}\ \\
\noindent\textbf{Diffusion probabilistic models}. Diffusion models are generative models that iteratively transform a random noise distribution into a target data distribution by learning a reverse denoising process~\cite{ho_diffusion,dicksteien_thermo,song_datadistri}. They have arisen as the current state of the art in the field of synthetic data generation~\cite{survey_diffusion}, surpassing GANs~\cite{lehtinen2021diffusion} in the quality of image synthesis --- after denoising either directly in the image space~\cite{kingma2021variational} or in a latent representation, such as latent diffusion models (LDMs)~\cite{latent_diffusion}. In the context of high-quality image generation leveraging intance noise injection, combining ideas from diffusion models with GANs has gained current research traction~\cite{denoising-diffusion-gans,diffusion-GAN}. However, the cost of the sampling procedure in diffusion-based models still remains an issue, which may be mitigated by modeling the denoising distribution as a complex multimodal distribution instead of a Gaussian~\cite{denoising-diffusion-gans}, or by making the number of timesteps dependent on the data and the generator~\cite{diffusion-GAN}.


Our approach for injecting instance noise is not based on an independent and expensive diffusion process. It is rather the iterative visit of the data distribution through the training epochs that occurs diffusively, after corrupting the generator inputs with a representation of the disorder state of the discriminator outputs.

\section{Background}

\subsection{Conditional GANs}
The generator $G$ in these models learn a mapping from the image $x$ and noise vector $z$ to the image $y$. Its output is discriminated by a model $D$, judging whether the image is real or fake. The objective is
\begin{equation}\label{eq:GAN}
\begin{split}
 \tilde{\mathcal{L}}_{GAN}(G,D)=&\ \mathbb{E}_{x,y}[\log D(x,y)]+\\
 &\ \mathbb{E}_{x,z}[\log(1- D(x,G(x,z)))],
 \end{split}
\end{equation}
where $G$ is trained to minimize this objective and $D$ is trained to maximize it, known as the min-max game.

With the introduction of the patchGAN discriminator in pix2pix~\cite{isola2017image}, the discriminator outputs a tensor (default size of $30\times30$), with each entry $D_i$ classifying a patch (receptive field size of $70\times70$) in the input image. With $N$ being the number of patches, the objective becomes
\begin{equation}\label{eq:patchGAN}
\begin{split}
 \mathcal{L}_{GAN}(G,D)=\dfrac{1}{N}\sum_{i=1}^N\;\tilde{\mathcal{L}}_{GAN}(G,D_{i}).
 \end{split}
\end{equation}
The motivation of discriminating by patches is enforcing the generator to produce correct high-frequency patterns, while the low frequencies are captured by a $L1$ penalty
\begin{equation}
 \mathcal{L}_{L1}(G) = \mathbb{E}_{x,y,z}[\|G(x,z)-y\|_1].
\end{equation}

\begin{figure*}
 \centering
 \includegraphics[width=0.95\linewidth]{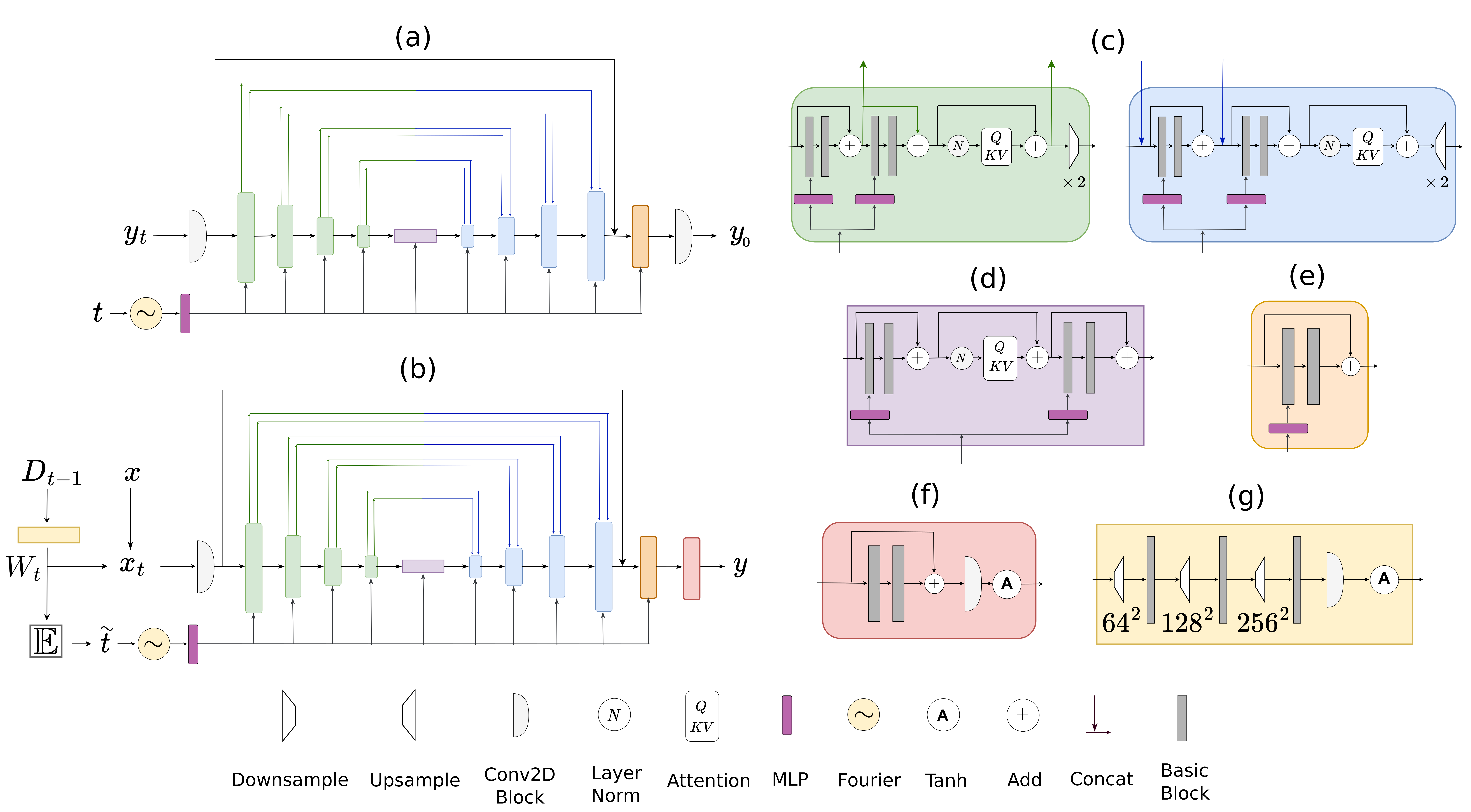}
 
 \caption{Building blocks (c)-(e) for the denoising UNet (a) used in diffusion models~\cite{ho_diffusion}. Our model (b) adds new building blocks for its generator, consisting of (g) for listening to the discriminator entropic state in order to corrupt the input image with noise, and a modified head (f) to remove spurious high-frequency patterns. All building blocks are based on a basic block that convolves the input, normalizes the resulting channels, optionally applies an affine transformation (using the Fourier features), and finally applies SiLU activation.}
 \label{fig:backbone}
\end{figure*}
The final objective is then
\begin{equation}\label{eq:gan_obj}
 \arg\amin_{G}\amax_{D}\;\;\mathcal{L}_{GAN}(G,D) + \lambda\,\mathcal{L}_{L1}(G),
\end{equation}
with $\lambda$ typically chosen as $\lambda=100$.

\subsection{Diffusion models}
Diffusion models are generative models designed to learn a data distribution $p(y_0)$ by sequentially denosing a normally-distributed variable $y_t\sim \alpha_t y_0 + \sigma_t \varepsilon$, by using a model $y_{\theta}=y_{\theta}(y_t, t)$ with the objective
\begin{equation}\label{eq:Ldm}
  \mathcal{L}_{DM}\propto\mathbb{E}_{\varepsilon,t}\|y_0-y_{\theta}(y_t, t)\|^2.
\end{equation}

Here the sequences $(\alpha_t)_{t=1}^{T}$ and $(\sigma_t)_{t=1}^{T}$ are chosen following a schedule that makes the signal-to-noise ratio $\textrm{SNR}(t)=\alpha_t^2/\sigma_t^2$ small enough at $t=T$ (typically $\textrm{SNR}(T)\,\|y_0\|^2\sim 10^{-5}$) and $\varepsilon\sim\mathcal{N}(0,1)$. In practice, only one schedule for the variable $\beta_t$ in $\bar{\alpha}_t=\prod_{s=1}^t(1-\beta_s)$ is chosen, with $\textrm{SNR}(t)=\bar{\alpha}_t/(1-\bar{\alpha}_t)$.

\vspace{1.5em}
\section{Attending the mean entropy (ATME)}
Introducing a discrimination by patches allows the discriminator to have a notion of \emph{where} the generator is failing. This makes the min-max game asymmetric in favor of the discriminator, since the generator has no direct spatial clue of where the discriminator is failing. Without further intervention, this forbids a proper equilibration of the game, resulting in a lack of convergence. Our task is to find the piece of information about the discriminator that the generator should know in order to recover the symmetry.

The situation is similar to the information asymmetry introduced in statistical physics by Maxwell's demon. That is, when two ideal gases at different temperatures are placed in separate containers communicated by a switchable hole, equilibration (corresponding to the maximum entropy state) is achieved when the hole is opened --- more fast-moving particles moving from the hot container to the cold one than backwards. But if an entity (demon) is introduced, which opens the hole to allow the backward motion and close it to block the forward, the cold container will be colder and the hot container hotter, and equilibration never takes place.

In the GAN game, the information gain introduced by the patch discriminator is analog to the information gain of Maxwell's  demon due to its knowledge of the velocity of the particles in both containers. We propose to incentivate a proper equilibration by letting the generator enforce the corresponding maximum entropy state --- seeing the Nash equilibrium~\cite{pmlr-v119-farnia20a} as a thermal equilibrium. The following fact (proved in the appendix) hints us on how to achieve this:  

\begin{theorem}
Let $Y_{i}$ be a binary random variables taking on the value $y_{i}=1$ with probability $D_{i}$. If they are statistical independent, the joint distribution $P(Y_1,\cdots,Y_N)$ has maximum entropy if and only if $D_{i}=\tfrac{1}{2}$ for all $i$. In this state, the objective in \cref{eq:patchGAN} reaches the value $-\log(4)$ for an optimal discriminator and generator.
\label{th:max_ent}
\end{theorem}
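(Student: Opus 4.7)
The statement decomposes into two essentially independent parts: (i) the entropy characterization, and (ii) the value of the objective at the equilibrium. I would handle them in that order.

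For part (i), the plan is to exploit independence to factor the joint entropy as a sum of Bernoulli entropies, and then maximize term by term. First I would write
\begin{equation*}
H(Y_1,\dots,Y_N)=\sum_{i=1}^{N}H(Y_i)=\sum_{i=1}^{N}h(D_i),
\end{equation*}
where $h(p)=-p\log p-(1-p)\log(1-p)$ is the binary entropy. Then I would invoke strict concavity of $h$ on $[0,1]$ (checking $h'(p)=\log\tfrac{1-p}{p}$ vanishes only at $p=\tfrac12$ and $h''(p)<0$) to conclude that each summand is uniquely maximized at $D_i=\tfrac12$, with value $\log 2$. Because the summands are independent variables, the global maximum of the sum requires every $D_i=\tfrac12$, giving the "if and only if" as a direct consequence, with maximum joint entropy $N\log 2$.

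For part (ii), I would specialize the classical Goodfellow argument to the patch objective in \cref{eq:patchGAN}. For a fixed $G$, each $\tilde{\mathcal{L}}_{GAN}(G,D_i)$ is pointwise of the form $a\log D_i+b\log(1-D_i)$, whose unique maximizer in $D_i$ is the standard ratio $D_i^{*}=\tfrac{a}{a+b}$, recovering the optimal-discriminator formula patch by patch. Substituting $D_i^{*}$ back converts $\tilde{\mathcal{L}}_{GAN}(G,D_i^{*})$ into a Jensen–Shannon-like expression whose minimum over $G$ is attained when the generator's patch distribution coincides with the data's, forcing $D_i^{*}=\tfrac12$ at the global saddle. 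Evaluating $\tilde{\mathcal{L}}_{GAN}$ at $D_i=\tfrac12$ yields $\log\tfrac12+\log\tfrac12=-\log 4$ for every patch $i$, and averaging over $N$ identical values in \cref{eq:patchGAN} preserves the value $-\log 4$. Combining with part (i) shows that this optimum coincides exactly with the maximum-entropy state of the joint $P(Y_1,\dots,Y_N)$, which is the conceptual point the paper wants to make.

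The only non-mechanical step is the uniqueness in part (i): one must note that independence is essential, since without it the entropy could in principle be larger by introducing correlations, but once independence is assumed the factorization and strict concavity reduce everything to the scalar Bernoulli case. I do not expect real difficulties: part (ii) is a direct transcription of the standard optimal-discriminator and Nash-equilibrium computation from vanilla GANs to the patch setting, and the $N$-fold averaging in \cref{eq:patchGAN} leaves the equilibrium value unchanged.
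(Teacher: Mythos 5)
Your proof is correct and takes essentially the same route as the paper: use independence to factor the joint entropy into Bernoulli entropies maximized uniquely at $D_i=\tfrac{1}{2}$, then evaluate the patch objective at that value to obtain $-\log(4)$, with the average over the $N$ patches leaving it unchanged. The only difference is one of detail, not of approach: you spell out the strict-concavity argument for uniqueness and re-derive the optimal-discriminator/Jensen--Shannon step, both of which the paper handles by citation (to Cover--Thomas and Goodfellow, respectively).
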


We propose to endow the generator with a notion of the state of  ``disorder'' of the discriminator decisions, this being a surrogate to its entropy. Denoting by $D_t$ the output tensor of the discriminator at training epoch $t$ (having entries $D_{i;t}$), we introduce the learnable mapping $W_t=W(D_{t-1})$ having a range in the space of the input images of the generator. This should have the following properties:
\begin{itemize}
 \item As $D_t$ tends to the maximum entropy state, $W(D_t)$ tends to a constant tensor, and viceversa. That is, as $D_{i;t}\rightarrow\tfrac{1}{2}$ for all patches $i$, $W_{r;t}\rightarrow w$ for all pixels $r$. This is our statement of the preservation of the state of disorder under the action of $W$.
 \item The differences $W(D_t)-W(D_{t-1})$ are uncorrelated in time and approximately Gaussian.
\end{itemize}

The second property is a weaker one, only ensuring that the input images for the generator, which we take as
\begin{equation}\label{eq:xt}
 x_t=x_0 + x_0 \,W(D_{t-1}),
\end{equation}
initially follow a Brownian motion, diffusing through the epochs during training. This allows us to borrow the intuition from the diffusion models. That is, we corrupt the input image $x_0=x$ with ``noise'' arising from $W(D_{t-1})$ and train the generator to get rid of this noise in order to capture the correct mapping $x\rightarrow y$ (as shown in \cref{fig:denoising}). As a side effect, removing this noise sends the signal to the discriminator to seek the maximum entropy state, by the main property of $W(D_t)$.   

\begin{figure*}
	\begin{tikzpicture}
		\begin{groupplot}[
			group style={group size= 2 by 4},
			height=3cm,
			width=.5\linewidth,
			no markers,
			every axis plot/.append style={thick},
			every axis/.append style={font=\footnotesize},
			ymax=2.6,
			xmin=-5,
			xmax=205,
			ytick={0,1,ln(4),2},
			yticklabels={0,1,$log(4)$,2},
			ylabel near ticks,
			xlabel near ticks,
			legend columns=-1
			]
			
			\newcommand{\cycleganlinestyle}{dashed}
			\newcommand{\cyclegancolor}{cyan}
			\newcommand{\asgitlinestyle}{densely dashed}
			\newcommand{\asgitcolor}{violet}
			\newcommand{\pixtopixlinestyle}{densely dotted}
			\newcommand{\pixtopixcolor}{orange}
			\newcommand{\atmelinestyle}{solid}
			\newcommand{\atmecolor}{teal}
			\newcommand{\unitlinestyle}{dotted}
			\newcommand{\unitlinecolor}{blue}
			\newcommand{\loglinecolor}{gray}
			
			\nextgroupplot[title=A$\rightarrow$B, ylabel=Facades,legend to name=lgnd]
			\addplot+ [\loglinecolor,forget plot] table [x expr=\coordindex+1, y expr=ln(4)]{make_figs/loss_D_facades_atme_BA_smooth.csv};
			\addplot+ [cyan,dashed] table [x expr=\coordindex+1, y=loss_D, col sep=comma] {make_figs/loss_D_facades_cyclegan_AB_smooth.csv};
			\addlegendentry{CycleGAN};
			\addplot+ [\asgitcolor,\asgitlinestyle] table [x expr=\coordindex+1, y=loss_D, col sep=comma] {make_figs/loss_D_facades_ASGIT_AB_smooth.csv};
			\addlegendentry{ASGIT};
			\addplot+ [\pixtopixcolor,\pixtopixlinestyle] table [x expr=\coordindex+1, y=loss_D, col sep=comma] {make_figs/loss_D_facades_pix2pix_AB_smooth.csv};
			\addlegendentry{pix2pix};
			\addplot+ [\unitlinecolor,\unitlinestyle] table [x expr=\coordindex+1, y=loss_D, col sep=comma] {make_figs/loss_D_facades_unit_AB_smooth.csv};
			\addlegendentry{UNIT};
			\addplot+ [\atmecolor,\atmelinestyle] table [x expr=\coordindex+1,  y=loss_D, col sep=comma] {make_figs/loss_D_facades_atme_AB_smooth.csv};
			\addlegendentry{ATME};
			\coordinate (top) at (rel axis cs:0,1);
			
			\nextgroupplot[title=B$\rightarrow$A]
			\addplot+ [\loglinecolor] table [x expr=\coordindex+1, y expr=ln(4)]{make_figs/loss_D_facades_atme_BA_smooth.csv};
			\addplot+ [\cyclegancolor,\cycleganlinestyle] table [x expr=\coordindex+1, y=loss_D, col sep=comma] {make_figs/loss_D_facades_cyclegan_BA_smooth.csv};
			\addplot+ [\asgitcolor,\asgitlinestyle] table [x expr=\coordindex+1, y=loss_D, col sep=comma] {make_figs/loss_D_facades_ASGIT_BA_smooth.csv};
			\addplot+ [\pixtopixcolor,\pixtopixlinestyle] table [x expr=\coordindex+1, y=loss_D, col sep=comma] {make_figs/loss_D_facades_pix2pix_BA_smooth.csv};
			\addplot+ [\unitlinecolor,\unitlinestyle] table [x expr=\coordindex+1, y=loss_D, col sep=comma] {make_figs/loss_D_facades_unit_BA_smooth.csv};
			\addplot+ [\atmecolor,\atmelinestyle] table [x expr=\coordindex+1,  y=loss_D, col sep=comma] {make_figs/loss_D_facades_atme_BA_smooth.csv};
			
			\nextgroupplot[ylabel=Maps]
			\addplot+ [\loglinecolor] table [x expr=\coordindex+1, y expr=ln(4)]{make_figs/loss_D_maps_atme_BA_smooth.csv};
			\addplot+ [\cyclegancolor,\cycleganlinestyle] table [x expr=\coordindex+1, y=loss_D, col sep=comma] {make_figs/loss_D_maps_cyclegan_AB_smooth.csv};
			\addplot+ [\asgitcolor,\asgitlinestyle] table [x expr=\coordindex+1, y=loss_D, col sep=comma] {make_figs/loss_D_maps_ASGIT_AB_smooth.csv};
			\addplot+ [\pixtopixcolor,\pixtopixlinestyle] table [x expr=\coordindex+1, y=loss_D, col sep=comma] {make_figs/loss_D_maps_pix2pix_AB_smooth.csv};
			\addplot+ [\unitlinecolor,\unitlinestyle] table [x expr=\coordindex+1, y=loss_D, col sep=comma] {make_figs/loss_D_maps_unit_AB_smooth.csv};
			\addplot+ [\atmecolor,\atmelinestyle] table [x expr=\coordindex+1,  y=loss_D, col sep=comma] {make_figs/loss_D_maps_atme_AB_smooth.csv};
			
			\nextgroupplot[]
			\addplot+ [\loglinecolor] table [x expr=\coordindex+1, y expr=ln(4)]{make_figs/loss_D_maps_atme_BA_smooth.csv};
			\addplot+ [\cyclegancolor,\cycleganlinestyle] table [x expr=\coordindex+1, y=loss_D, col sep=comma] {make_figs/loss_D_maps_cyclegan_BA_smooth.csv};
			\addplot+ [\asgitcolor,\asgitlinestyle] table [x expr=\coordindex+1, y=loss_D, col sep=comma] {make_figs/loss_D_maps_ASGIT_BA_smooth.csv};
			\addplot+ [\pixtopixcolor,\pixtopixlinestyle] table [x expr=\coordindex+1, y=loss_D, col sep=comma] {make_figs/loss_D_maps_pix2pix_BA_smooth.csv};
			\addplot+ [\unitlinecolor,\unitlinestyle] table [x expr=\coordindex+1, y=loss_D, col sep=comma] {make_figs/loss_D_maps_unit_BA_smooth.csv};
			\addplot+ [\atmecolor,\atmelinestyle] table [x expr=\coordindex+1,  y=loss_D, col sep=comma] {make_figs/loss_D_maps_atme_BA_smooth.csv};
			
			\nextgroupplot[ylabel=Cityscapes]
			\addplot+ [gray] table [x expr=\coordindex+1, y expr=ln(4)]{make_figs/loss_D_cityscapes_atme_BA_smooth.csv};
			\addplot+ [\cyclegancolor,\cycleganlinestyle] table [x expr=\coordindex+1, y=loss_D, col sep=comma] {make_figs/loss_D_cityscapes_cyclegan_AB_smooth.csv};
			\addplot+ [\asgitcolor,\asgitlinestyle] table [x expr=\coordindex+1, y=loss_D, col sep=comma] {make_figs/loss_D_cityscapes_ASGIT_AB_smooth.csv};
			\addplot+ [\pixtopixcolor,\pixtopixlinestyle] table [x expr=\coordindex+1, y=loss_D, col sep=comma] {make_figs/loss_D_cityscapes_pix2pix_AB_smooth.csv};
			\addplot+ [\unitlinecolor,\unitlinestyle] table [x expr=\coordindex+1, y=loss_D, col sep=comma] {make_figs/loss_D_cityscapes_unit_AB_smooth.csv};
			\addplot+ [\atmecolor,\atmelinestyle] table [x expr=\coordindex+1,  y=loss_D, col sep=comma] {make_figs/loss_D_cityscapes_atme_AB_smooth.csv};
			
			\nextgroupplot[]
			\addplot+ [\loglinecolor] table [x expr=\coordindex+1, y expr=ln(4)]{make_figs/loss_D_cityscapes_atme_BA_smooth.csv};
			\addplot+ [\cyclegancolor,\cycleganlinestyle] table [x expr=\coordindex+1, y=loss_D, col sep=comma] {make_figs/loss_D_cityscapes_cyclegan_BA_smooth.csv};
			\addplot+ [\asgitcolor,\asgitlinestyle] table [x expr=\coordindex+1, y=loss_D, col sep=comma] {make_figs/loss_D_cityscapes_ASGIT_BA_smooth.csv};
			\addplot+ [\pixtopixcolor,\pixtopixlinestyle] table [x expr=\coordindex+1, y=loss_D, col sep=comma] {make_figs/loss_D_cityscapes_pix2pix_BA_smooth.csv};
			\addplot+ [\unitlinecolor,\unitlinestyle] table [x expr=\coordindex+1, y=loss_D, col sep=comma] {make_figs/loss_D_cityscapes_unit_BA_smooth.csv};
			\addplot+ [\atmecolor,\atmelinestyle] table [x expr=\coordindex+1,  y=loss_D, col sep=comma] {make_figs/loss_D_cityscapes_atme_BA_smooth.csv};
			\coordinate (bot) at (rel axis cs:1,0);

			\nextgroupplot[ylabel=Night2day]
			\addplot+ [\loglinecolor] table [x expr=\coordindex+1, y expr=ln(4)]{make_figs/loss_D_night2day_atme_BA_smooth.csv};
			\addplot+ [\cyclegancolor,\cycleganlinestyle] table [x expr=\coordindex+1, y=loss_D, col sep=comma] {make_figs/loss_D_night2day_cyclegan_AB_smooth.csv};
			\addplot+ [\asgitcolor,\asgitlinestyle] table [x expr=\coordindex+1, y=loss_D, col sep=comma] {make_figs/loss_D_night2day_ASGIT_AB_smooth.csv};
			\addplot+ [\pixtopixcolor,\pixtopixlinestyle] table [x expr=\coordindex+1, y=loss_D, col sep=comma] {make_figs/loss_D_night2day_pix2pix_AB_smooth.csv};
			\addplot+ [\unitlinecolor,\unitlinestyle] table [x expr=\coordindex+1, y=loss_D, col sep=comma] {make_figs/loss_D_night2day_unit_AB_smooth.csv};
			\addplot+ [\atmecolor,\atmelinestyle] table [x expr=\coordindex+1,  y=loss_D, col sep=comma] {make_figs/loss_D_night2day_atme_AB_smooth.csv};
			
			\nextgroupplot[]
			\addplot+ [\loglinecolor] table [x expr=\coordindex+1, y expr=ln(4)]{make_figs/loss_D_maps_atme_BA_smooth.csv};
			\addplot+ [\cyclegancolor,\cycleganlinestyle] table [x expr=\coordindex+1, y=loss_D, col sep=comma] {make_figs/loss_D_night2day_cyclegan_BA_smooth.csv};
			\addplot+ [\asgitcolor,\asgitlinestyle] table [x expr=\coordindex+1, y=loss_D, col sep=comma] {make_figs/loss_D_night2day_ASGIT_BA_smooth.csv};
			\addplot+ [\pixtopixcolor,\pixtopixlinestyle] table [x expr=\coordindex+1, y=loss_D, col sep=comma] {make_figs/loss_D_night2day_pix2pix_BA_smooth.csv};
			\addplot+ [\unitlinecolor,\unitlinestyle] table [x expr=\coordindex+1, y=loss_D, col sep=comma] {make_figs/loss_D_night2day_unit_BA_smooth.csv};
			\addplot+ [\atmecolor,\atmelinestyle] table [x expr=\coordindex+1,  y=loss_D, col sep=comma] {make_figs/loss_D_night2day_atme_BA_smooth.csv};
			\coordinate (bot) at (rel axis cs:1,0);
	
		\end{groupplot}
		\path (top)--(bot) coordinate[midway] (group center);
		\node[above,rotate=90] at (group center -| current bounding box.west) {$-\mathcal{L}_{GAN}(G,D)$};
		\node [below] (epoch) at (group center |- current bounding box.south) {epoch};
		\node[below, yshift=-.5em] at(epoch) {\pgfplotslegendfromname{lgnd}};
	\end{tikzpicture}
	\caption{Smoothed $-\mathcal{L}_{GAN}(G,D)$ from \cref{eq:patchGAN} at the end of each training epoch for all GAN models and datasets considered. According to \cref{eq:gan_obj} and \cref{th:max_ent}, this should converge --- in the limit of a large enough model and infinite data~\cite{goodfellow2016nips} --- to the Nash equilibrium, where the value $\log(4)$ is reached. Being the lightest of all, our model ATME shows better converge properties (on average) except in the largest dataset Night2day, where it lacks capacity to accommodate all the data variability. Also, its convergence is slower in the tasks B$\rightarrow$A where it is harder to remove the noise applied to the input images due to this noise significantly altering their semantic content.}
	\label{fig:losses}
\end{figure*}
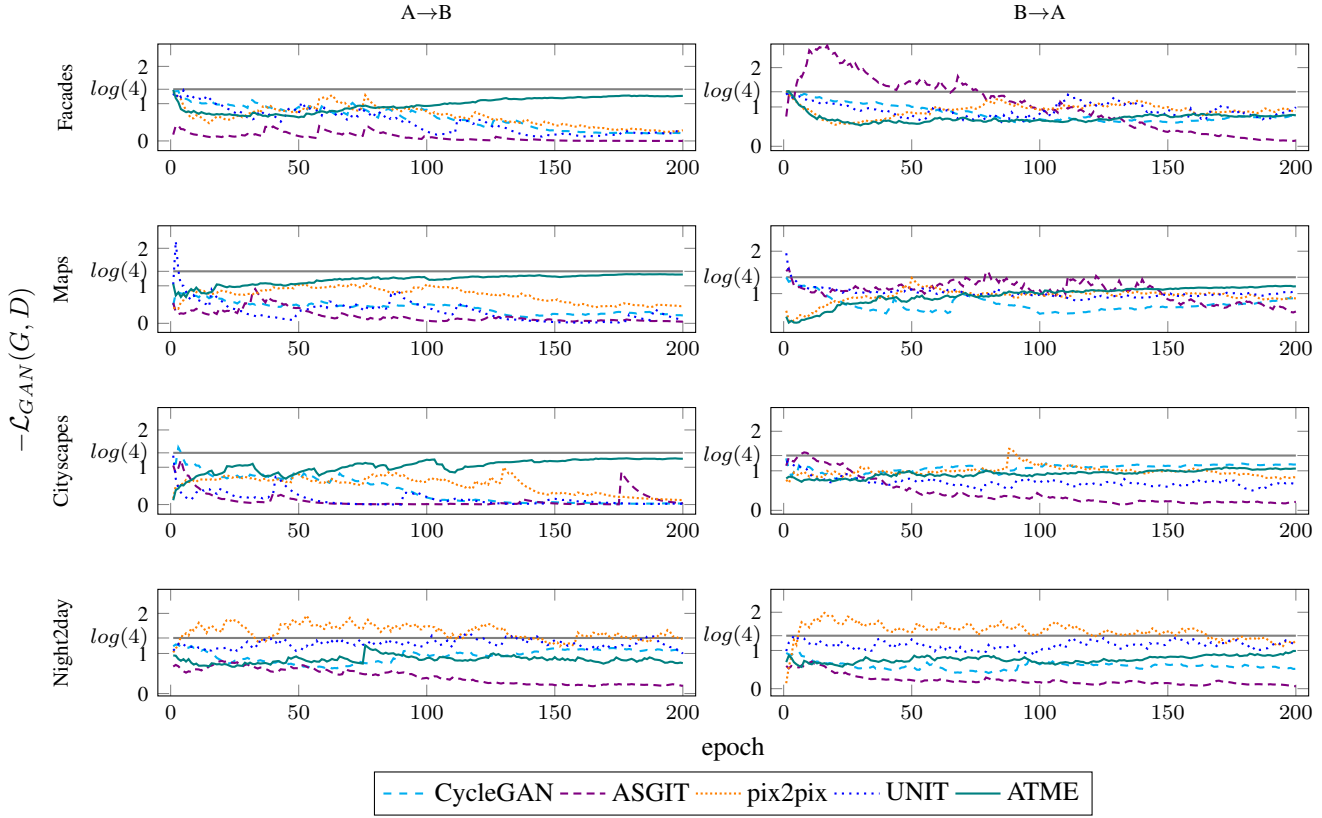

It is important to note that, although the epochs index the time steps $t$ in the experiments, the arrow of time set in the generator has to follow the discriminator's entropic state. This is achieved by estimating the temporal position of the noising events according to
\begin{equation}
 \tilde{t}=\mathbb{E} \,[W(D_{t-1})],
\end{equation}
which is similar to the ordering imposed in the diffusion models by $\textrm{SNR}(T)$ being small and $\textrm{SNR}(0)$ being large.

The loss of ATME at epoch $t$ is then, similar to \cref{eq:gan_obj},
\begin{equation}
\mathcal{L}_{\textrm{ATME}}^t(G,D)=\mathcal{L}_{GAN}^t(G,D)+\lambda\,\mathcal{L}_{L1}^t(G),
\end{equation}
with the superindex indicating that the variables $(x,z)$ are replaced by the combined variable $x_t$ in \cref{eq:xt}, and the generator acquires the functional form (see \cref{eq:Ldm}) that is used in the diffussion models, $G(x_t):=y_{\theta}(x_t, \tilde{t}\,)$.  

At inference, $D_{t-1}$ is sampled element-wise from a normal distribution with mean $\tfrac{1}{2}$ (the maximum-entropy value) and small standard deviation (set to $0.001$ in all experiments).

\subsection{Model architecture}
The architecture of the patch discriminator in ATME follows the implementation of pix2pix~\cite{isola2017image}. The generator is shown in \cref{fig:backbone}(b). It has the UNet structure used in the diffusion models (see \cref{fig:backbone}(a)) with additional blocks that we introduce to listen to the discriminator's entropic state, \cref{fig:backbone}(g), and a modified head (\cref{fig:backbone}(f)) to remove spurious high-frequency patterns.

The UNet is mainly parameterized by an embedding dimension $d$ and attention resolutions $R=(r_1, r_2, \cdots, r_H)$, with $H$ being half the depth of the network (excluding the middle block). The notation means that at the $i$th downsampling layer, the number of feature maps go from $d\,r_{i-1}$ to $d\,r_i$ (with $r_0=1$). The default network for all the experiments has $d=64$ with $R=(1,2,4,8)$, corresponding to a network with four downsampling layers, a middle block, and four upsampling layers, as shown in \cref{fig:backbone}(b).

\section{Experiments}
\subsection{Datasets}
We use four of the standard datasets for supervised image-to-image translation: Facades, Maps, Cityscapes, and Night2day, whose details can be found in~\cite{isola2017image}. For Night2day, we train only on 5000 images. Image-to-image translation is performed in both directions A$\rightarrow$B and B$\rightarrow$A, as defined in \Cref{tab:dataset_AB}.

\begin{table}[t]
\begin{center}
\begin{small}
\begin{tabular}{ccc}
\toprule
\textbf{Dataset} & \textbf{A} & \textbf{B}\\
\midrule
Facades & Photo & Architectural labels  \\
Maps & Aerial photo & Map  \\ 
Cityscapes & Photo & Semantic labels  \\ 
Night2day & Night photo & Day photo\\
\bottomrule
\end{tabular}
\end{small}
\end{center}
\vspace{-1em}
\caption{Datasets used in this work. The corresponding images are paired, \ie the first half of the width of the image is called A and the second half is called B.} 
\label{tab:dataset_AB}
\end{table}

\setlength{\tabcolsep}{5pt}
\begin{table*}[t]
\begin{center}
\begin{small}

\begin{tabular}{cccccccccc}
\toprule
\multirow{2}{*}{\textbf{Model}} &
\multicolumn{1}{c}{\textbf{\# Params}} &
\multicolumn{2}{c}{\textbf{Facades}} &
\multicolumn{2}{c}{\textbf{Maps}} &
\multicolumn{2}{c}{\textbf{Cityscapes}} &
\multicolumn{2}{c}{\textbf{Night2day}} \\
&[M] & A$\rightarrow$B & B$\rightarrow$A & A$\rightarrow$B & B$\rightarrow$A & A$\rightarrow$B & B$\rightarrow$A & A$\rightarrow$B & B$\rightarrow$A\\
\midrule
pix2pix & 57 & 31.3 $\pm$ 2.3 & 11.0 $\pm$ 0.7 & 25.7 $\pm$ 2.0 & 19.0 $\pm$ 1.8 & 16.0 $\pm$ 0.8 & 7.8 $\pm$ 1.0 & 19.2 $\pm$ 1.6 &  11.6 $\pm$ 1.1\\
CycleGAN & 114 & 28.1 $\pm$ 2.1 & 18.2 $\pm$ 1.0 & 60.5 $\pm$ 1.3 & 10.8 $\pm$ 1.4 & 45.0 $\pm$ 1.0 & 16.9 $\pm$ 1.3 & \textbf{12.6 $\pm$ 2.0} &  \textbf{9.0 $\pm$ 0.9}\\
UNIT & 116 & 47.9 $\pm$ 2.4 & 18.0 $\pm$ 0.9 & 30.1 $\pm$ 0.9 & 9.3 $\pm$ 1.1 & 16.6 $\pm$ 1.1 & 12.8 $\pm$ 1.0 & 15.6 $\pm$ 2.1 & 19.7 $\pm$ 1.5 \\
ASGIT & 57& 22.6 $\pm$ 1.6 & \textbf{4.9 $\pm$ 0.8} & 9.2 $\pm$ 1.2 & 7.7 $\pm$ 1.2 & 16.0 $\pm$ 1.3 & \textbf{4.2 $\pm$ 0.5} & 17.5 $\pm$ 2.2 & 11.1 $\pm$ 1.3\\

LDM & 270& 30.9 $\pm$ 2.4 & 23.0 $\pm$ 1.0 & 7.9 $\pm$ 1.0 & 10.3 $\pm$ 1.3 & \textbf{5.6 $\pm$ 0.6} & 5.6 $\pm$ 0.5 & 19.6 $\pm$ 2.2 & 11.9 $\pm$ 1.3\\

ATME & \textbf{39}& \textbf{18.4 $\pm$ 1.8} & 9.4 $\pm$ 0.7 & \textbf{2.8 $\pm$ 0.6} & \textbf{2.8 $\pm$ 0.7} & 6.5 $\pm$ 1.0 & 5.7 $\pm$ 0.8 & 19.7 $\pm$ 2.1 & 18.3 $\pm$ 1.4\\

\bottomrule
\end{tabular}
\end{small}
\end{center}
\caption{KID scores (scaled by 100) for the methods evaluated in the datasets shown (lower is better). The best result per column is shown in bold. Our model ATME, shows superior performance, defined as the number of times it has the best KID per task.}
\label{tab:kid_all}
\end{table*}

\begin{table}[t]
\begin{center}
\begin{small}
\begin{tabular}{cccc}
\toprule
\textbf{Model} & \textbf{Per-pixel acc.} & \textbf{Per-class acc.} & \textbf{Class IoU}\\
\midrule
pix2pix & 0.63 & 0.18 & 0.13 \\
CycleGAN & 0.49 & 0.13 & 0.09 \\ 
UNIT & 0.48 & 0.12 & 0.09 \\
ASGIT & 0.54 & 0.17 & 0.11 \\ 
LDM & 0.57 & 0.17 & 0.11 \\ 
ATME & \textbf{0.64} & \textbf{0.19} & \textbf{0.14}\\
\midrule
Ground truth & 0.80 & 0.26 & 0.21\\

\bottomrule
\end{tabular}
\end{small}
\end{center}
\caption{FCN scores (higher is better) after training on Cityscapes B$\rightarrow$A at a resolution of $256\times256$. The best result per column is shown in bold.}
\label{tab:fcn_scores}
\end{table}
\subsection{Baselines}
Since our model is built using the pix2pix framework, we train the latter for comparison. Additionally, we train CycleGAN~\cite{zhu2017unpaired} (despite its introduction for unsupervised problems) as a reference of a generator that is trained to have an approximate inverse mapping. The hypothesis is that adding cycle consistency may improve convergence since this restricts the possible paths to equilibrium, with respect to those allowed by the highly under-unconstrained source-to-target mapping originally present in pix2pix. Finally, we train the supervised version of ASGIT~\cite{Lin_2021_WACV} as a reference of a state-of-the-art model using attention in the discriminator, as well as their implementation of UNIT (at a $256\times256$ resolution) using a 2-branch residual attention network~\cite{wang2017residual} in the discriminator.

On the other hand, due to the diffusion in a latent-space representation of the target images being more efficient than in the image space, we choose to train an LDM~\cite{latent_diffusion} conditioned on the source images for comparison.

\subsection{Training details}

We train all GAN models from scratch using the default configuration in pix2pix. That is, we use the Adam optimizer with $\beta_1=0.5$ and $\beta_2=0.999$, with an initial learning rate of $0.0002$ for both the generator (UNet-256) and discriminator (patchGAN) of the vanilla GANs. The learning rate is kept constant in the first 100 epochs and linearly decayed to zero for the following 100 epochs. A batch size of 48 is used and instance normalization. Random jittering and horizontal flipping is applied during training to the images with resolution $256\times256$. For ATME, the UNet-256 is replaced by the UNet in \cref{fig:backbone}(b) with an embedding dimension of $d=64$ and resolutions $R=(1,2,4,8)$.  

On the other hand, we train the LDMs with the default configuration in~\cite{latent_diffusion} for the input resolution $256\times256$. That is, the denoising is done by the UNet in \cref{fig:backbone}(a) after downsampling the input (target) images by a factor of $f=4$ (using the VQ-reg encoder with attention) running the diffusion process, and concatenating the output of this process with a spatially-scaled version of the conditioning (source) image. The diffusion follows a linear schedule of $\beta_t$, from $\beta_1=0.0015$ to $\beta_T=0.0205$ in $T=1000$ timesteps. 




\subsection{Metrics}
We follow recent practices~\cite{MMD2018demystifying,Lin_2021_WACV} and report the Kernel Inception Distance (KID) between feature representations of real and fake images. The feature extraction~\cite{kid_iclr_2018} is done by the Inception v3 model. Additionally, the FCN score~\cite{isola2017image} is computed to further evaluate details of the performance in the Cityscapes dataset. This measures the accuracy of the FCN-8s semantic classifier~\cite{fcn8s} (pre-trained on real images) after segmenting the generated images and comparing the results against the labels these images were synthesized from. 

\begin{figure*}
 \centering
 \includegraphics[width=0.86\linewidth]{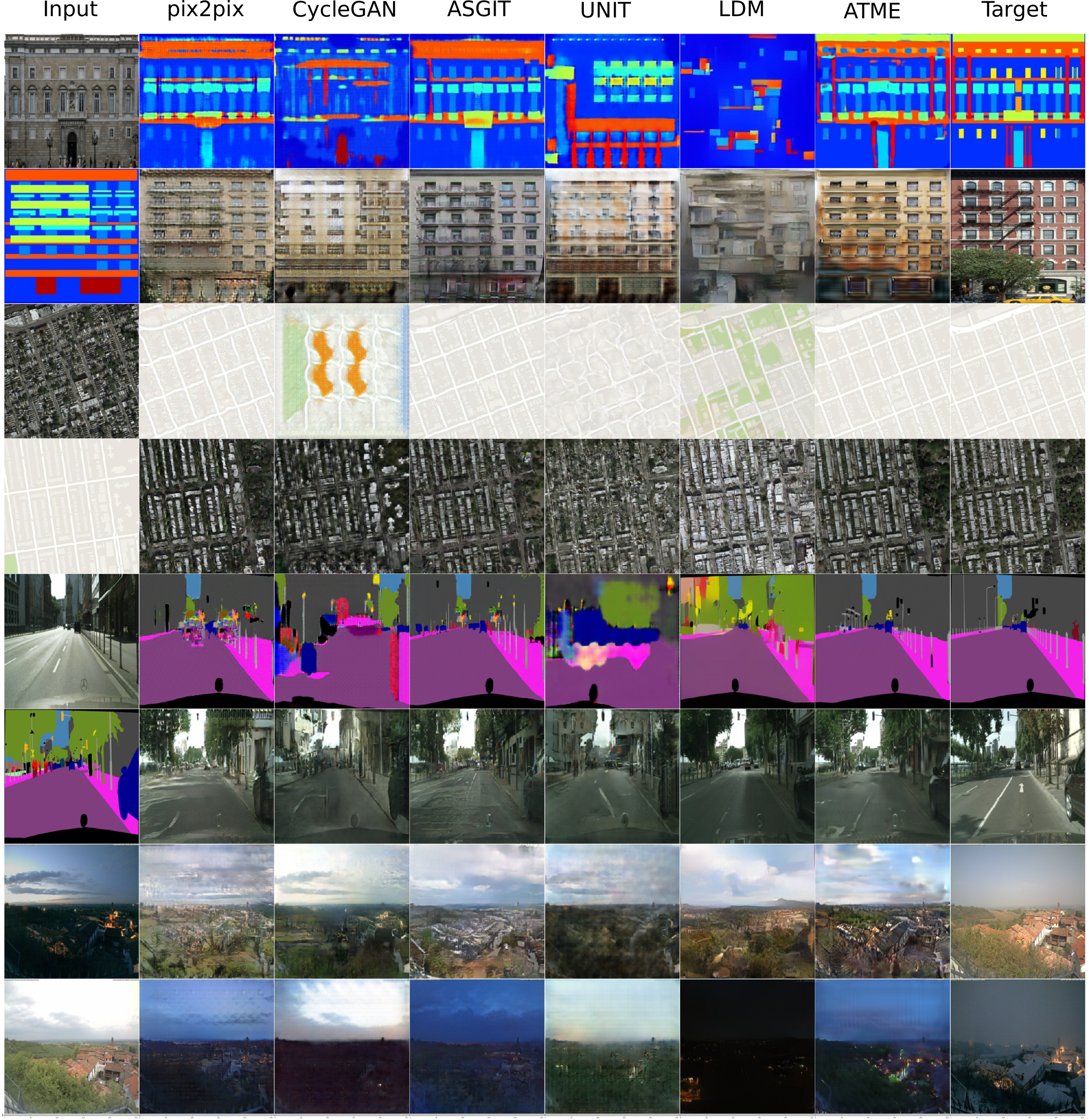}  
 \caption{Sample predictions of all models in the evaluated datasets.}
 \label{fig:mosaic_all}
\end{figure*}

\subsection{Evaluating convergence of GANs}
We keep track of the loss in \cref{eq:patchGAN} at the end of each epoch and notice that, by \cref{th:max_ent} and \cref{eq:gan_obj}, the convergence to equilibrium is manifested as the approach of $-\mathcal{L}_{GAN}(G,D)$ to $\log(4)$ during optimization. This is shown in \cref{fig:losses}, where ATME shows stable convergence in most cases. The cases where convergence seems slower is presumably due to ATME not being large enough --- since GANs are designed to reach Nash equilibrium with a large enough model and infinite data~\cite{goodfellow2016nips} --- or being harder for the generator to remove the input noise in the required number of epochs. The latter is evident from the success in the convergence for the column A$\rightarrow$B in \cref{fig:losses}, which represents the corruption with noise of the photo (much more semantic content than the labels) according to \Cref{tab:dataset_AB}. The exception is the Night2day dataset, for which the photo with more semantic content is in the opposite side B. This slowness in noise removal is further illustrated in \cref{fig:denoising} where, at the same epoch $t=20$, the noise in Cityscapes B$\rightarrow$A still has much more structure than the noise in Maps A$\rightarrow$B. 

\begin{figure*}
 \centering
 \includegraphics[width=0.62\linewidth]{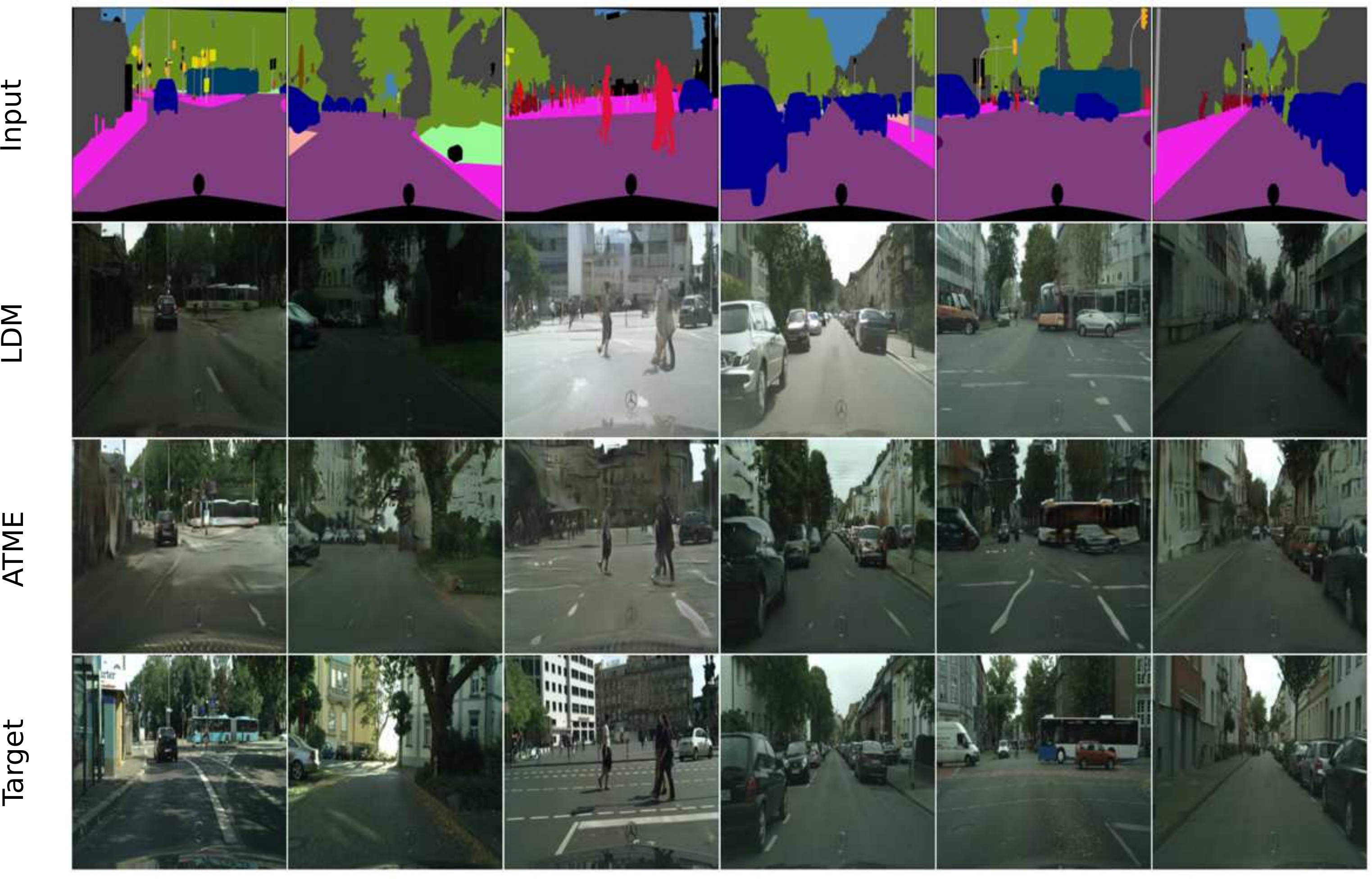}
 \caption{Qualitative predictions of LDM and ATME for a subset of test images in the Cityscapes dataset.}
 \label{fig:mosaic_qualitative}
\end{figure*}

As mentioned above, the model capacity may also play a role, specially for the largest dataset. As seen in \Cref{tab:kid_all}, ATME is the lightest model so it may not be able to accomodate all the variability of the data distribution in this case. We plan to investigate this further in the future. However, preliminary results show that by enlarging ATME to the configuration $d=64$ and $R=(1,1,2,2,4,4,4,8)$, which brings the model to a capacity similar to pix2pix (\ie with $\sim$57M parameters), the worst KID in \Cref{tab:kid_all}, obtained in the task Night2day A$\rightarrow$B, is lowered to 16.3 $\pm$ 2.1, taking ATME from the last place to the top-3 after the bigger CycleGAN and UNIT models. These bigger models were observed to suffer mode collapse for some tasks, as evidenced in \cref{fig:mosaic_all}.

\subsection{Quality of image synthesis}
\Cref{tab:kid_all} shows the KID scores for all models and datasets. Despite being the lightest model, our model ATME shows superior performance than the other methods, assessed as the number of times that it has the lowest KID per task.  

The quality of image generation is further evaluated using the FCN scores in the Cityscapes dataset (see \Cref{tab:fcn_scores}), confirming the superiority of ATME compared to the other methods. Sample predictions from all methods in all datasets are shown in \cref{fig:mosaic_all}.

\subsubsection{Distribution Modes: GANs vs Diffusion models}
Both GANs and diffusion models are trained to learn the target distribution conditioned on the source images. Given an input image $x$, the models are expected to output the most probable image $\hat{y}$ sharing semantic content with $x$. This should have a strong similarity with the ground truth $y$. Although the diffusion models are known to predict images with very high quality, surprisingly for us, the predictions are far from the right mode, as can be seen in \cref{fig:mosaic_qualitative}, \ie LDM not being able to understand the semantics of the right pose (\eg car facing inwards being confused with the car facing outwards), the right contrast, etc. This explains the results of \Cref{tab:fcn_scores} and suggests that GAN models are more appropriate for supervised image-to-image translation than diffusion models.

\section{Conclusion}

We have shown that a significant improvement in the convergence properties of GANs for image-to-image translation is achieved when making the generator and discriminator exhange information symmetrically. We achieve this by informing the generator about the entropic state of the discriminator, as a guide for the equilibration of the adversarial game. The quality of image synthesis is high compared to state-of-the-art methods and our model ATME predicts the modes of the conditional target distribution better than diffusion models.

Several research directions are left open, including exploring a generator model in ATME with higher capacity and, most importantly, extending the method to unsupervised image-to-image translation.



\section*{Appendix}
\setcounter{theorem}{0}
To avoid clutter in notation, we omit the condition on $x$ in the following.
\begin{theorem}
 Let $Y_{i}$ be binary random variables taking on the value $y_{i}=1$ with probability $D_{i}$. If they are statistical independent, the joint distribution $P(Y_1,\cdots,Y_N)$ has maximum entropy if and only if $D_{i}=\tfrac{1}{2}$ for all $i$. In this state, the objective in \cref{eq:patchGAN} reaches the value $-\log(4)$ for an optimal discriminator and generator.
\end{theorem}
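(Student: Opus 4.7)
The plan splits naturally along the two claims in the statement. For the first claim, I would begin by exploiting the statistical independence assumption to factor the joint distribution as $P(Y_1,\ldots,Y_N) = \prod_{i=1}^N P(Y_i)$, so that the joint Shannon entropy decomposes additively: $H(Y_1,\ldots,Y_N) = \sum_{i=1}^N H(Y_i)$. Since each $Y_i$ is Bernoulli, $H(Y_i) = h(D_i) := -D_i \log D_i - (1-D_i)\log(1-D_i)$, and the problem reduces to maximizing each summand independently. A quick second-derivative computation shows $h$ is strictly concave on $[0,1]$ with a unique maximum at $D_i = \tfrac12$ where $h(\tfrac12) = \log 2$. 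This gives the ``if and only if'' immediately, with the maximum joint entropy equal to $N \log 2$.

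For the second claim, I would apply the classical pix2pix/Goodfellow argument patch-wise. Fix $G$ and, treating $D_i$ as a function taking values in $[0,1]$, rewrite $\tilde{\mathcal{L}}_{GAN}(G, D_i)$ as an integral over the relevant joint density and maximize the integrand pointwise. This yields the optimal patch discriminator
\begin{equation*}
D_i^\star(x,y) = \frac{p_{\text{data}}(y\mid x)}{p_{\text{data}}(y\mid x) + p_g(y\mid x)},
\end{equation*}
where $p_g$ denotes the generator's induced conditional distribution on the $i$th patch. Plugging $D_i^\star$ back into $\tilde{\mathcal{L}}_{GAN}$ gives a Jensen-Shannon-like expression whose value the outer minimization over $G$ attains when $p_g = p_{\text{data}}$, forcing $D_i^\star \equiv \tfrac12$. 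At that point each $\tilde{\mathcal{L}}_{GAN}(G^\star, D_i^\star)$ evaluates to $\log\tfrac12 + \log\tfrac12 = -\log 4$, and averaging over the $N$ patches in \cref{eq:patchGAN} preserves this value.

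Finally, I would close the loop between the two parts by observing that $D_i^\star \equiv \tfrac12$ is precisely the maximum entropy configuration identified in the first half, so the optimal adversarial value $-\log 4$ is attained exactly when the joint distribution of the patch decisions $Y_i$ is maximally disordered. The only step that requires some care, rather than invocation of a textbook result, is justifying the pointwise maximization that produces $D_i^\star$: one needs the patch-wise densities $p_{\text{data}}(y\mid x)$ and $p_g(y\mid x)$ to be well-defined and to share support, which is the implicit regularity assumption inherited from the original pix2pix setting. Everything else reduces to concavity of $h$ and the standard Jensen-Shannon computation, so I do not anticipate a genuine obstacle.
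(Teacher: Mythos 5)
Your proposal is correct and follows essentially the same route as the paper: independence makes the joint entropy the sum of Bernoulli entropies, each maximized uniquely at $D_i=\tfrac12$, and substituting $D_i=\tfrac12$ into \cref{eq:patchGAN} yields $-\log(4)$. The only difference is that you explicitly re-derive the optimal patch discriminator $D_i^\star=p_{\text{data}}/(p_{\text{data}}+p_g)$ and the Jensen--Shannon minimization, whereas the paper simply cites Goodfellow et al.\ for the fact that $-\log(4)$ is the value at the optimal generator--discriminator pair.
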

\begin{proof}
The joint entropy becomes the sum of the marginal entropies $-\sum_i\sum_{y_i} D_i(y_i)\,\log D_i(y_i)$ if and only if the random variables $Y_1,\cdots,Y_N$ are statistically independent~\cite{Cover.2006}, which is implicit in the patch discriminator being Markovian~\cite{isola2017image}. Now, the entropy of a binary random variable is known to reach a maximum when $D_i(y_i)=\frac{1}{2}$ for all $i$ and $y_i$.
In this case, the objective in \cref{eq:patchGAN} collapses to
 \begin{equation}
  \mathcal{L}_{GAN}=\frac{1}{N}N\left(\log\left( \tfrac{1}{2}\right)+\log\left(\tfrac{1}{2}\right)\right)=-\log(4),
 \end{equation}
 corresponding to the value for an optimal discriminator and generator~\cite{gan_goodfellow_nips_2014}.
\end{proof}


{\small
\bibliographystyle{ieee_fullname}
\bibliography{atme}

\begin{thebibliography}{10}\itemsep=-1pt

\bibitem{wgan_icml_2017}
Martin Arjovsky, Soumith Chintala, and L\'{e}on Bottou.
\newblock Wasserstein generative adversarial networks.
\newblock In {\em Proceedings of the 34th International Conference on Machine
  Learning - Volume 70}, ICML'17, page 214–223, 2017.

\bibitem{berthelot2017began}
David Berthelot, Thomas Schumm, and Luke Metz.
\newblock Began: Boundary equilibrium generative adversarial networks.
\newblock {\em arXiv preprint arXiv:1703.10717}, 2017.

\bibitem{kid_iclr_2018}
Mikolaj Binkowski, Danica~J. Sutherland, Michael Arbel, and Arthur Gretton.
\newblock Demystifying {MMD} gans.
\newblock In {\em 6th International Conference on Learning Representations,
  {ICLR} 2018, Vancouver, BC, Canada, April 30 - May 3, 2018, Conference Track
  Proceedings}, 2018.

\bibitem{MMD2018demystifying}
Mikołaj Bińkowski, Dougal~J. Sutherland, Michael Arbel, and Arthur Gretton.
\newblock Demystifying {MMD} {GAN}s.
\newblock In {\em International Conference on Learning Representations}, 2018.

\bibitem{chen2020wavegrad}
Nanxin Chen, Yu Zhang, Heiga Zen, Ron~J Weiss, Mohammad Norouzi, and William
  Chan.
\newblock Wavegrad: Estimating gradients for waveform generation.
\newblock {\em arXiv preprint arXiv:2009.00713}, 2020.

\bibitem{choi_stargan}
Yunjey Choi, Minje Choi, Munyoung Kim, Jung-Woo Ha, Sunghun Kim, and Jaegul
  Choo.
\newblock Stargan: Unified generative adversarial networks for multi-domain
  image-to-image translation.
\newblock In {\em Proceedings of the IEEE conference on computer vision and
  pattern recognition}, pages 8789--8797, 2018.

\bibitem{Cover.2006}
T.~M. Cover and Joy~A. Thomas.
\newblock {\em Elements of information theory}.
\newblock Wiley-Interscience, Hoboken, N.J, second edition edition, 2006.

\bibitem{spa_gan_2021}
Hajar Emami, Majid~Moradi Aliabadi, Ming Dong, and Ratna~Babu Chinnam.
\newblock Spa-gan: Spatial attention gan for image-to-image translation.
\newblock {\em IEEE Transactions on Multimedia}, 23:391--401, 2021.

\bibitem{pmlr-v119-farnia20a}
Farzan Farnia and Asuman Ozdaglar.
\newblock Do {GAN}s always have {N}ash equilibria?
\newblock In {\em Proceedings of the 37th International Conference on Machine
  Learning}, pages 3029--3039, 2020.

\bibitem{fedus2017many}
William Fedus, Mihaela Rosca, Balaji Lakshminarayanan, Andrew~M Dai, Shakir
  Mohamed, and Ian Goodfellow.
\newblock Many paths to equilibrium: Gans do not need to decrease a divergence
  at every step.
\newblock In {\em International Conference on Learning Representations}, 2018.

\bibitem{goodfellow2016nips}
Ian Goodfellow.
\newblock Nips 2016 tutorial: Generative adversarial networks.
\newblock {\em arXiv preprint arXiv:1701.00160}, 2016.

\bibitem{gan_goodfellow_nips_2014}
Ian Goodfellow, Jean Pouget-Abadie, Mehdi Mirza, Bing Xu, David Warde-Farley,
  Sherjil Ozair, Aaron Courville, and Yoshua Bengio.
\newblock Generative adversarial nets.
\newblock In Z. Ghahramani, M. Welling, C. Cortes, N. Lawrence, and K.Q.
  Weinberger, editors, {\em Advances in Neural Information Processing Systems},
  volume~27. Curran Associates, Inc., 2014.

\bibitem{wgan_gp_nips_2017}
Ishaan Gulrajani, Faruk Ahmed, Martin Arjovsky, Vincent Dumoulin, and Aaron
  Courville.
\newblock Improved training of wasserstein gans.
\newblock In {\em Proceedings of the 31st International Conference on Neural
  Information Processing Systems}, NIPS'17, page 5769–5779, 2017.

\bibitem{ho_diffusion}
Jonathan Ho, Ajay Jain, and Pieter Abbeel.
\newblock Denoising diffusion probabilistic models.
\newblock In H. Larochelle, M. Ranzato, R. Hadsell, M.F. Balcan, and H. Lin,
  editors, {\em Advances in Neural Information Processing Systems}, volume~33,
  pages 6840--6851. Curran Associates, Inc., 2020.

\bibitem{Huang_2018_ECCV}
Xun Huang, Ming-Yu Liu, Serge Belongie, and Jan Kautz.
\newblock Multimodal unsupervised image-to-image translation.
\newblock In {\em Proceedings of the European Conference on Computer Vision
  (ECCV)}, September 2018.

\bibitem{Huh_2019_CVPR}
Minyoung Huh, Shao-Hua Sun, and Ning Zhang.
\newblock Feedback adversarial learning: Spatial feedback for improving
  generative adversarial networks.
\newblock In {\em Proceedings of the IEEE/CVF Conference on Computer Vision and
  Pattern Recognition (CVPR)}, June 2019.

\bibitem{isola2017image}
Phillip Isola, Jun-Yan Zhu, Tinghui Zhou, and Alexei~A. Efros.
\newblock Image-to-image translation with conditional adversarial networks.
\newblock In {\em Proceedings of the IEEE Conference on Computer Vision and
  Pattern Recognition}, 2017.

\bibitem{jolicoeur_iclr_2019}
A. Jolicoeur-Martineau.
\newblock The relativistic discriminator: A key element missing from standard
  gan.
\newblock In {\em International Conference on Learning Representations}, 2019.

\bibitem{kingma2021variational}
Diederik Kingma, Tim Salimans, Ben Poole, and Jonathan Ho.
\newblock Variational diffusion models.
\newblock {\em Advances in neural information processing systems},
  34:21696--21707, 2021.

\bibitem{lehtinen2021diffusion}
Jaakko Lehtinen, Marko J{\"a}rvenp{\"a}{\"a}, Niels Kasenburg, Antti Honkela,
  and Mathias Berglund.
\newblock Diffusion models beat gans on image synthesis.
\newblock {\em arXiv preprint arXiv:2105.05233}, 2021.

\bibitem{Lin_2021_WACV}
Yu Lin, Yigong Wang, Yifan Li, Yang Gao, Zhuoyi Wang, and Latifur Khan.
\newblock Attention-based spatial guidance for image-to-image translation.
\newblock In {\em Proceedings of the IEEE/CVF Winter Conference on Applications
  of Computer Vision (WACV)}, pages 816--825, January 2021.

\bibitem{unit_nips_2017}
Ming-Yu Liu, Thomas Breuel, and Jan Kautz.
\newblock Unsupervised image-to-image translation networks.
\newblock In {\em Advances in Neural Information Processing Systems},
  volume~30, 2017.

\bibitem{Liu_2019_ICCV}
Ming-Yu Liu, Xun Huang, Arun Mallya, Tero Karras, Timo Aila, Jaakko Lehtinen,
  and Jan Kautz.
\newblock Few-shot unsupervised image-to-image translation.
\newblock In {\em Proceedings of the IEEE/CVF International Conference on
  Computer Vision (ICCV)}, October 2019.

\bibitem{fcn8s}
J. Long, E. Shelhamer, and T. Darrell.
\newblock Fully convolutional networks for semantic segmentation.
\newblock In {\em 2015 IEEE Conference on Computer Vision and Pattern
  Recognition (CVPR)}, pages 3431--3440, 2015.

\bibitem{mao2017least}
Xudong Mao, Qing Li, Haoran Xie, Raymond~YK Lau, Zhen Wang, and Stephen
  Paul~Smolley.
\newblock Least squares generative adversarial networks.
\newblock In {\em Proceedings of the IEEE international conference on computer
  vision}, pages 2794--2802, 2017.

\bibitem{RevModPhys.81.1}
Koji Maruyama, Franco Nori, and Vlatko Vedral.
\newblock Colloquium: The physics of maxwell's demon and information.
\newblock {\em Rev. Mod. Phys.}, 81:1--23, Jan 2009.

\bibitem{mescheder2018training}
Lars Mescheder, Andreas Geiger, and Sebastian Nowozin.
\newblock Which training methods for gans do actually converge?
\newblock In {\em International conference on machine learning}, pages
  3481--3490. PMLR, 2018.

\bibitem{mescheder_nips2017}
Lars Mescheder, Sebastian Nowozin, and Andreas Geiger.
\newblock The numerics of gans.
\newblock In I. Guyon, U.~Von Luxburg, S. Bengio, H. Wallach, R. Fergus, S.
  Vishwanathan, and R. Garnett, editors, {\em Advances in Neural Information
  Processing Systems}, volume~30. Curran Associates, Inc., 2017.

\bibitem{miyato2018spectral}
Takeru Miyato, Toshiki Kataoka, Masanori Koyama, and Yuichi Yoshida.
\newblock Spectral normalization for generative adversarial networks.
\newblock In {\em International Conference on Learning Representations}, 2018.

\bibitem{review_img2img_2022}
Yingxue Pang, Jianxin Lin, Tao Qin, and Zhibo Chen.
\newblock Image-to-image translation: Methods and applications.
\newblock {\em IEEE Transactions on Multimedia}, 24:3859--3881, 2022.

\bibitem{mdemon2017}
Andrew Rex.
\newblock Maxwell’s demon—a historical review.
\newblock {\em Entropy}, 19(6), 2017.

\bibitem{latent_diffusion}
Robin Rombach, Andreas Blattmann, Dominik Lorenz, Patrick Esser, and Bj{\"o}rn
  Ommer.
\newblock High-resolution image synthesis with latent diffusion models. 2022
  ieee.
\newblock In {\em CVF Conference on Computer Vision and Pattern Recognition
  (CVPR)}, pages 10674--10685, 2022.

\bibitem{super_resolution}
Chitwan Saharia, Jonathan Ho, William Chan, Tim Salimans, David~J. Fleet, and
  Mohammad Norouzi.
\newblock Image super-resolution via iterative refinement, 2021.

\bibitem{salimans2022progressive}
Tim Salimans and Jonathan Ho.
\newblock Progressive distillation for fast sampling of diffusion models.
\newblock {\em arXiv preprint arXiv:2202.00512}, 2022.

\bibitem{dicksteien_thermo}
Jascha Sohl-Dickstein, Eric~A. Weiss, Niru Maheswaranathan, and Surya Ganguli.
\newblock Deep unsupervised learning using nonequilibrium thermodynamics, 2015.

\bibitem{song_datadistri}
Yang Song and Stefano Ermon.
\newblock Generative modeling by estimating gradients of the data distribution,
  2019.

\bibitem{wang2017residual}
Fei Wang, Mengqing Jiang, Chen Qian, Shuo Yang, Cheng Li, Honggang Zhang,
  Xiaogang Wang, and Xiaoou Tang.
\newblock Residual attention network for image classification.
\newblock In {\em Proceedings of the IEEE conference on computer vision and
  pattern recognition}, pages 3156--3164, 2017.

\bibitem{diffusion-GAN}
Zhendong Wang, Huangjie Zheng, Pengcheng He, Weizhu Chen, and Mingyuan Zhou.
\newblock Diffusion-gan: Training gans with diffusion, 2022.

\bibitem{denoising-diffusion-gans}
Zhisheng Xiao, Karsten Kreis, and Arash Vahdat.
\newblock Tackling the generative learning trilemma with denoising diffusion
  gans, 2021.

\bibitem{survey_diffusion}
Ling Yang, Zhilong Zhang, Yang Song, Shenda Hong, Runsheng Xu, Yue Zhao,
  Yingxia Shao, Wentao Zhang, Bin Cui, and Ming-Hsuan Yang.
\newblock Diffusion models: A comprehensive survey of methods and applications,
  2022.

\bibitem{zhao2017energybased}
Junbo Zhao, Michael Mathieu, and Yann LeCun.
\newblock Energy-based generative adversarial networks.
\newblock In {\em International Conference on Learning Representations}, 2017.

\bibitem{zhu2017unpaired}
Jun-Yan Zhu, Taesung Park, Phillip Isola, and Alexei~A Efros.
\newblock Unpaired image-to-image translation using cycle-consistent
  adversarial networks.
\newblock In {\em Proceedings of the IEEE International Conference on Computer
  Vision}, pages 2223--2232, 2017.

\end{thebibliography}
}

\end{document}